\title{Convergence of Manifold Filter-Combine Networks}
   \author{\Name{David R. Johnson} \Email{davejohnson408@u.boisestate.edu}\\
   \Name{Joyce Chew} \Email{joycechew@math.ucla.edu}\\
   \Name{Siddharth Viswanath} \Email{siddharth.viswanath@yale.edu }\\
   \Name{Edward De Brouwer} \Email{edward.debrouwer@gmail.com}\\
   \Name{Deanna Needell} \Email{deanna@math.ucla.edu }\\
   \Name{Smita Krishnawamy} \Email{smita.krishnaswamy@yale.edu}\\
   \Name{Michael Perlmutter} \Email{mperlmutter@boisestate.edu}\\
 }
\begin{document}

\maketitle

\begin{abstract}
In order to better understand manifold neural networks (MNNs), we introduce Manifold Filter-Combine Networks (MFCNs). The filter-combine framework parallels the popular aggregate-combine paradigm for graph neural networks (GNNs) and naturally suggests many interesting families of MNNs which can be interpreted as the manifold analog of various popular GNNs. We then propose a method for implementing MFCNs on high-dimensional point clouds that relies on approximating the manifold by a sparse graph. We prove that our method is consistent in the sense that it converges to a continuum limit as the number of data points tends to infinity.
\end{abstract}
\begin{keywords}
Geometric Deep Learning, Manifold Learning, Spectral Convergence
\end{keywords}


\section{Introduction}

Geometric deep learning \citep{bronstein2017geometric,bronstein2021geometric} aims to extend the success of deep learning from data such as images, with a regular grid-like structure, to more irregular domains such as graphs and manifolds. Notably, graph neural networks (GNNs), e.g.,  have rapidly emerged as an extremely active area of research  \citep{ZHOU202057}. 

By contrast, the manifold side of geometric deep learning is much less explored, and 
much of the existing literature on manifold deep learning  
is limited to 2D surfaces  \citep{masci2015geodesic,Masci:geoCNN2015, schonsheck2022parallel}  and cannot be applied to higher-dimensional manifolds. 
 This is despite the fact that unsupervised manifold learning algorithms \citep{coifman:diffusionMaps2006,maaten:tSNE2008} are commonly used for representing higher-dimensional data \citep{van2018recovering,  moyle2021structural}).  

 Inspired by the successes of GNNs and manifold learning, several recent works have proposed  \emph{manifold neural networks} (MNNs) \citep{wang2021stability,wang2021stabilityrel}  
 that define convolution in terms of a manifold Laplacian $\mathcal{L}$,  
 paralleling spectral GNNs \citep{Defferrard2018,Levie:CayleyNets2017} that define convolution in terms of the graph Laplacian $\mathbf{L}$. 
 Moreover, several recent papers \cite{CHEW2024101635,chew2022convergence,chew2022manifold,wangsparseconvergence} have introduced numerical methods for implementing MNNs on point clouds satisfying the manifold hypothesis and establish the accuracy and statistical consistency of these methods by proving they converge to a continuum limit as the number of data points tends to infinity under various assumptions.

In this work, in order to better understand  MNNs, we introduce 
\emph{Manifold Filter-Combine Networks}. The filter-combine paradigm parallels the aggregate-combine framework introduced in \citet{xu2018how} to understand GNNs. It leads one to consider many interesting classes of MNNs which may be thought of as the manifold counterparts of various popular GNNs. We provide sufficient conditions for such networks to converge to a continuum limit as the number of sample points, $n$, tends to infinity.  
Notably, our analysis shows that if the weights of the network are properly normalized, then the rate of convergence depends linearly on the depth of the network, in contrast to previous results exhibiting exponential dependence.

\section{Background}

\subsection{Graph Signal Processing and Spectral Graph Neural Networks}\label{sec: background gsp}
Graph signal processing (GSP) \cite{shuman:emerging2013} extends Fourier analysis to graphs. For a function (signal) $\mathbf{x}$ defined on the vertices of a graph $G=(V, E)$, $V=\{v_1,\ldots,v_n\}$, one can define its graph Fourier transform by 
$
\widehat{\mathbf{x}}(i)=\langle\mathbf{u}_i,\mathbf{x}\rangle_2,
$
where $\mathbf{u}_1,\ldots,\mathbf{u}_n$ is an  orthonormal basis of eigenvectors
for the graph Laplacian $\mathbf{L}=\mathbf{D}-\mathbf{A}$, $\mathbf{L}\mathbf{u}_i=\lambda_i\mathbf{u}_i$.\footnote{Here and throughout, we identify the function $\mathbf{x}$, with the vector $\mathbf{x}\in\mathbb{R}^n$, $x_i=\mathbf{x}(v_i)$.}  
Since the $\mathbf{u}_i$ form an orthonormal basis, we obtain the Fourier inversion formula
$
\mathbf{x} 
=\sum_{i=1}^n \widehat{\mathbf{x}}(i)\mathbf{u}_i.
$
We can then define convolutional operators in the Fourier domain $
w(\mathbf{L})\mathbf{x}= \sum_{i=1}^n w(\lambda_i)\widehat{\mathbf{x}}(i)\mathbf{u}_i
$, and use these convolutions to define spectral GNNs such as ChebNet \citep{Defferrard2018}.

\subsection{Manifold Learning}\label{sec: background manifold learning}

Manifold learning algorithms \citep{lin2015geometric,Moon2018Manifold} aim to detect non-linear structure in the data, analogous to the manner in which PCA is used to detect linear structure.
Given a high-dimensional point cloud
 $\{x_i\}_{i=1}^n\subseteq \mathbb{R}^D$, which is assumed to lie upon an unknown $d$-dimensional  manifold $\mathcal{M}$ ($d\ll D)$,
they aim to produce a low-dimensional representation of the data points $x_i$ that approximates the intrinsic geometry of~$\mathcal{M}$.

Many popular manifold learning algorithms such as Diffusion Maps \citep{coifman:diffusionMaps2006} and Laplacian Eigenmaps \citep{belkin:laplacianEigen2003} operate by constructing a graph $G_n=(V_n, E_n)$ where the vertices are the data points, i.e., $V_n=\{x_i\}_{i=1}^n$.  
For instance, 
Laplacian Eigenmaps  map each $x_j\in\mathbb{R}^D$ to the point $(\phi_2^n(j),\ldots,\phi^n_{m+1}(j))\in\mathbb{R}^m$,
where $\phi^n_1,\ldots,\phi^n_n$ are the eigenvectors of the graph Laplacian $\mathbf{L}_n.$\footnote{The first eigenvector is omitted because it is constant.}  
In order to justify the intuition that Laplacian Eigenmaps and related algorithms capture the intrinsic geometry of the data, one may aim to prove that the graph Laplacian $\mathbf{L}_n$ converges to a manifold Laplacian such as the (negative) Laplace-Beltrami Operator, $-\Delta=-\text{div}\circ\nabla$ as the number of points $n\rightarrow \infty.$  Results along these lines have been established in numerous works such as \citet{dunson2021spectral, cheng2022eigen, Calder2019,belkin2008towards}.

\section{Methods}
\subsection{Signal Processing and Spectral Convolution on Manifolds}

Let $\mathcal{M}$ be a compact, connected, $d$-dimensional Riemannian manifold embedded in $\mathbb{R}^D$, $D\gg d$. 
Let $\mu$ be a probability distribution on $\mathcal{M}$ with a smooth, non-vanishing density $\rho$, and let $L^2(\mathcal{M})$ denote the set of functions where $\|f\|_{L^2(\mathcal{M})}^2=\langle f,f\rangle_{L^2(\mathcal{M})}=\int_{\mathcal{M}}|f|^2d\mu<\infty$.  

We let $\mathcal{L}=-\frac{1}{2\rho}\text{div}(\rho^2\nabla f)$ denote the weighted manifold Laplacian.  It is known that $\mathcal{L}$ has an orthonormal basis of eigenfunctions $\mathcal{L}\phi_i=\lambda_i\phi_i$. Thus, we
  define a generalized Fourier series by
$
\widehat{f}(i)=\langle f,\phi_i \rangle_{L^2(\mathcal{M})},
$
and obtain the Fourier inversion formula 
$
f(x) 
=\sum_{i=1}^\infty \widehat{f}(i)\phi_i(x).
$
We may then define spectral convolution operators\footnote{$w(\mathcal{L})$ is independent of the choice of eigenbasis. See Remark 1 of \cite{CHEW2024101635} for details.}, $w(\mathcal{L})$, for $w\in L^\infty([0,\infty))$ by 
\begin{equation}\label{eqn: specconvcontinuum} w(\mathcal{L})f=\sum_{i=1}^\infty w(\lambda_i) \widehat{f}(i) \phi_i.
\end{equation}

\subsection{Manifold Filter-Combine Networks}\label{sec: MFCN def}

We now introduce the \emph{filter-combine} paradigm, a novel framework, for thinking about manifold neural networks,  
 paralleling the aggregate-combine framework introduced in \citet{xu2018how} in order to understand GNNs.\footnote{We use the term ``filter'' rather than ``aggregate'' because our filters are not required to be localized averaging operations such as those used in common message-passing GNNs.}  

We will assume that our input data is a vector-valued function $F=(f_1,\ldots,f_{C})$, where each $f_i\in L^2(\mathcal{M})$.
Each hidden layer of the network will consist of the following five steps: 
(i) We \textit{filter} each input channel $f_k$ by a family of spectral operators $w_{j,k}(\mathcal{L})$, $1\leq j\leq J$. (ii) For each fixed $j$, we  \textit{combine} the filtered feature functions $\tilde{f}_{j,k}=(w_{j,k}(\mathcal{L})f_k)$ into new feature functions $g_{j,k}$ where each $g_{j,k}$ is a linear combination of the $\tilde{f}_{j,k}$.   (iii) For each fixed $k$, we perform a \textit{cross-filter convolution}  that maps $\{ g_{j,k}\}_{j=1}^J$ to $\{\tilde{g}_{j,k}\}_{j=1}^{J'}$ where each $\tilde{g}_{j,k}$ is a linear combination of the $ g_{j,k}$.  (iv) We apply some \textit{non-linear, nonexpansive pointwise activation function} $\sigma$ to each of the $\tilde{g}_{j,k}$,  to obtain $h_{j,k}=\sigma\circ \tilde{g}_{j,k}$. (v) Lastly, we \textit{reshape} the $\{h_{i,j}\}_{1\leq i \leq \tilde{C},1\leq j\leq J'}$ into $\{f'_i\}_{i=1}^{C'}$, where $C'=\tilde{C}J'$.  We note that each of these steps can be effectively omitted by choosing parameters in a suitable manner (e.g., choosing certain matrices to be the identity). For further details and discussion, please see Appendix \ref{app: MFCN}.

We shall refer to networks constructed using the layers above as Manifold Filter-Combine Networks (MFCNs). 
As illustrated in the examples below, the MFCN framework naturally allows one to consider many different subfamilies of networks. Indeed, for nearly any (spectral) GNN, there is a corresponding MFCN. Moreover, if desired, one could adapt the MFCN framework to consider the counterparts of non-spectral GNNs by allowing the filters in step (i) to be generic linear operators on $L^2(\mathcal{M})$.

\begin{example}[MCNs]\label{ex: mcn}
In order to obtain a network analogous to GCN \cite{kipf2016semi}, we let $\mathcal{A}=w(\mathcal{L})$ for a decreasing function $w$, e.g., $w(\lambda)=e^{-\lambda}$, so that $\mathcal{A}$ may be thought of as a low-pass filter.
We will omit cross-filter convolutions (step (iii)) and use a learnable weight matrix $\Theta$ to combine the filtered features in step (ii). We thus obtain $f_k^{\ell+1}=\sum_{i=1}^{C_\ell}\theta_{i,k}\widetilde{A}f_k$ which may be written compactly as 
$F^{(\ell+1)}=\sigma\left(\widetilde{A}F^{(\ell)}\Theta^{(\ell)}\right)$.
\end{example}

\begin{example}[Manifold ChebNets]\label{ex: chebnet}
In order to obtain a network analogous to ChebNet \citep{Defferrard2018}, one can construct a network where the filters take the form $p_{j,k}(\mathcal{P})$, where each $p_{j,k}$ is a polynomial and $\mathcal{P}=e^{-\mathcal{L}}$ is the heat-kernel. (Note that $\mathcal{P}$ has the same eigenfunctions as $\mathcal{L}$ and that its eigenvalues are given by $0\leq \omega_k=e^{-\lambda_k}\leq 1$. Therefore, polynomials of $\mathcal{P}$ are still spectral filters of the form \eqref{eqn: specconvcontinuum}.)
\end{example}

\begin{example}[The Manifold Scattering Transform]\label{ex: scat}
The manifold scattering transform \citep{perlmutter:geoScatCompactManifold2020} is a hand-crafted, wavelet-based method for deep learning on manifolds inspired by analogous constructions for Euclidean data
\citep{mallat:scattering2012}   
and graphs \citep{gama:stabilityGraphScat2019,gama:diffScatGraphs2018,zou:graphCNNScat2018,gao:graphScat2018}.
Here, we omit steps (ii) and (iii) and consider a family of wavelets $\{w_j(\lambda)\}_{j=1}^J$, chosen to be a band-pass filters such as or $w_j(\lambda)=e^{-2^{j-1}\lambda}-e^{-2^j\lambda}$. One then defines $\ell$-th order \emph{scattering coefficients} by  
$U[j_1,\ldots,j_\ell]f_k=\sigma(w_{j_{\ell}}(\mathcal{L})U[j_1,\ldots,j_{\ell-1}]f_k),$ $U[j_1]f_k=\sigma(w_{j_{\ell}}(\mathcal{L})f_k)$.
We could also consider variations which include steps (ii) and (iii), paralleling analogous graph constructions \citep{tong2022learnable,wenkel2022overcoming}.
\end{example}

\subsection{Implementation on MFCNs on Point Clouds}\label{sec: MFCN discrete implementation}
We now consider the setting where we do not know the manifold $\mathcal{M}$, but are merely given finitely many sample points $\{x_i\}_{i=1}^n\in\mathbb{R}^D$
 (i.e., a point cloud) assumed to lie upon an unknown manifold $\mathcal{M}$ sampled from density $\rho$.  
 We construct a graph $G_n=(V_n, E_n)$, whose vertices are the sample points, and edges are defined by $\{x_i,x_j\}\in E_n$ if $\|x_i-x_j\|_2<\epsilon$, where we set $\epsilon \sim \left ( \log(n)/n\right )^{\frac{1}{d+4}}$ following the lead of \citet{Calder2019}. 
We then approximate $\mathcal{L}$ by the graph Laplacian $
\mathbf{L}_n=\frac{d+2}{v_d n\epsilon^{d+2}}\left(\mathbf{D}_{n}-\mathbf{A}_{n}\right) 
$ where $\mathbf{D}_n$ and $\mathbf{A}_n$ are the degree and adjacency matrices of $G_n$ and $v_d$ is the volume of the unit ball in $\mathbb{R}^d$.

For an input signal $f$, we let $\mathbf{x}=P_nf$, where $P_nf\in\mathbb{R}^n$ is the vector  $P_nf(i)=f(x_i)/\sqrt{n}$. 
We then approximate $w(\mathcal{L})f$ by $w(\mathbf{L}_n)\mathbf{x}=\sum_{i=1}^\infty w(\lambda^n_i) \widehat{\mathbf{x}}(i) \phi^n_i$, where $\{\phi_i^n\}_{i=1}^n$ is an 
 an orthonormal basis of eigenvectors, $\mathbf{L}_n \phi_i^n = \lambda_i^n \phi_i^n,$  
and implement the MFCN as in Section \ref{sec: MFCN def}, but with this approximation used in step (i). (See Appendix \ref{app: MFCN} for more details.)

Below, in Theorem \ref{thm: bound given filter bound short}, we prove that 
$
\lim_{n\rightarrow\infty}\|\mathbf{x}_k^{\ell}-P_nf_k^{\ell}\|_2=0,
$
where $\mathbf{x}^\ell_k$ is the $k$-th signal in the $\ell$-th layer of the discrete implementation of the network. 
In other words, with sufficiently many data points, the result of our discrete implementation will be nearly the same as if one implemented the entire network on the manifold $\mathcal{M}$, using global knowledge of both $\mathcal{M}$ and the function $F$ and then discretized the corresponding output. We first prove an intermediate result on the convergence of spectral filters. 

\begin{theorem}\label{thm: Filter Error short}  Let $w:[0,\infty)\rightarrow \mathbb{R}$, $\|w\|_{L^\infty([0,\infty))}\leq 1$, and assume that $w$ is normalized Lipschitz, i.e., $|w(\lambda_1)-w(\lambda_2)|\leq |\lambda_1-\lambda_2|$. Assume that $f$ has $\kappa<\infty$ nonzero Fourier coefficients. 
Then, with probability $1-\mathcal{O}(n^{-9}),$ we have  
\begin{align*}\label{eqn: filter stability with x}
\|w(\mathbf{L}_n)P_nf-\hspace{-.03in}P_nw(\mathcal{L})f\|_2\leq& 
\mathcal{O}\left(\hspace{-.05in}\left( \frac{\log(n)}{n}\right)^{\frac{1}{d+4}}\right)\hspace{-.05in}\|f\|_{L^2(\mathcal{M})}+\mathcal{O}\left(\left(\frac{\log(n)}{n}\right)^{\frac{1}{4}}\right)\|f\|_{L^4(\mathcal{M})},
\end{align*}
where the constants implied  by the big-$\mathcal{O}$ notation depend on $\kappa$ and the geometry of $\mathcal{M}$.
\end{theorem}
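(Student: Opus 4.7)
Since $f$ has at most $\kappa$ nonzero Fourier coefficients, $f=\sum_{i=1}^\kappa \widehat{f}(i)\phi_i$, so the continuum target admits the expansion $P_n w(\mathcal{L})f = \sum_{i=1}^\kappa w(\lambda_i)\widehat{f}(i)\,P_n\phi_i$, while the discrete quantity has the spectral expansion $w(\mathbf{L}_n)P_nf = \sum_{j=1}^n w(\lambda_j^n)\langle P_nf,\phi_j^n\rangle_2\,\phi_j^n$. The plan is to introduce the auxiliary vector $\mathcal{J} := \sum_{i=1}^\kappa \widehat{f}(i)\,\phi_i^n$ and bound the error via
\begin{equation*}
\|w(\mathbf{L}_n)P_nf - P_n w(\mathcal{L})f\|_2 \leq \|w(\mathbf{L}_n)(P_nf - \mathcal{J})\|_2 + \|w(\mathbf{L}_n)\mathcal{J} - P_n w(\mathcal{L})f\|_2.
\end{equation*}
Since $w(\mathbf{L}_n)$ is symmetric with operator norm at most $\|w\|_{L^\infty([0,\infty))} \leq 1$, the first summand is at most $\|P_nf - \mathcal{J}\|_2$; the second summand expands as $\sum_{i=1}^\kappa \widehat{f}(i)[w(\lambda_i^n)\phi_i^n - w(\lambda_i)P_n\phi_i]$.

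\textbf{Key inputs.} Two ingredients drive the bounds. First, the spectral convergence results established by \citet{Calder2019} give both $|\lambda_i^n-\lambda_i|$ and $\|\phi_i^n - P_n\phi_i\|_2$ at rate $(\log n/n)^{1/(d+4)}$ with probability $1-\mathcal{O}(n^{-9})$, uniformly for $i\leq\kappa$ after a union bound. Second, a Bernstein-type concentration inequality for empirical averages $\frac{1}{n}\sum_i g(x_i)$ controls discretized norms and inner products: applied to $g = f^2$ it yields $|\|P_nf\|_2^2 - \|f\|_{L^2(\mathcal{M})}^2| = \mathcal{O}(\sqrt{\log n/n})\,\|f\|_{L^4(\mathcal{M})}^2$, and the elementary inequality $|a-b| \leq \sqrt{|a^2-b^2|}$ for $a,b\geq 0$ converts this into a $(\log n/n)^{1/4}\|f\|_{L^4(\mathcal{M})}$ bound on $|\|P_nf\|_2 - \|f\|_{L^2(\mathcal{M})}|$, which is the source of the second term in the theorem. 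Analogous concentration arguments control the auxiliary quantities $\|P_n\phi_i\|_2$ and $\langle P_n\phi_i, \phi_j^n\rangle_2$ that appear along the way.

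\textbf{Putting it together and the main obstacle.} For the first summand, expanding $P_nf - \mathcal{J} = \sum_{i=1}^\kappa \widehat{f}(i)(P_n\phi_i - \phi_i^n)$ and applying Cauchy--Schwarz (to pass from $\sum|\widehat{f}(i)|$ to $\sqrt{\kappa}\,\|f\|_{L^2(\mathcal{M})}$) yields a $(\log n/n)^{1/(d+4)}\|f\|_{L^2(\mathcal{M})}$ contribution. For the second summand, adding and subtracting $w(\lambda_i^n)P_n\phi_i$ produces one contribution proportional to $|w(\lambda_i^n) - w(\lambda_i)| \cdot \|P_n\phi_i\|_2$, controlled by the normalized Lipschitz property of $w$ together with the eigenvalue convergence rate, and another proportional to $\|\phi_i^n - P_n\phi_i\|_2$, controlled directly by the eigenvector convergence rate; the $\|P_n\phi_i\|_2$ factor is absorbed into constants using the Bernstein bound above. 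The main obstacle is the bookkeeping: one must ensure that the slower manifold-learning rate $(\log n/n)^{1/(d+4)}$ is only multiplied by the $L^2$ norm of $f$, and not accidentally combined with the Monte Carlo $L^4$ terms, since naive triangle-inequality splits tend to bundle everything against the worst rate. Handling the potential tail $\sum_{j>\kappa}\langle P_nf,\phi_j^n\rangle_2 \phi_j^n$ is a second point requiring care; since this tail equals the orthogonal projection of $P_nf$ onto the complement of $\mathrm{span}(\phi_1^n,\ldots,\phi_\kappa^n)$, its norm is bounded by $\|P_nf - \mathcal{J}\|_2$, which is already estimated above and so introduces no genuinely new error term.
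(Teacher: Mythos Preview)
Your approach is correct and takes a genuinely different route from the paper. The paper expands $w(\mathbf{L}_n)P_nf$ using the \emph{discrete} Fourier coefficients $\langle P_nf,\phi_i^n\rangle_2$ and then compares these to $\widehat{f}(i)$ term by term; this forces repeated bounds on $\|P_nf\|_2$ and on $|\langle P_nf,P_n\phi_i\rangle_2-\langle f,\phi_i\rangle_{L^2(\mathcal{M})}|$ via Bernstein's inequality, which is precisely the source of the $\|f\|_{L^4(\mathcal{M})}$ term in the stated bound. Your introduction of $\mathcal{J}=\sum_{i\leq\kappa}\widehat{f}(i)\,\phi_i^n$, combined with the operator-norm bound $\|w(\mathbf{L}_n)\|\leq 1$, reduces every error term to $|\widehat f(i)|$ multiplied by $\|\phi_i^n-P_n\phi_i\|_2$, $|\lambda_i^n-\lambda_i|$, or $\|P_n\phi_i\|_2$; Bernstein is therefore applied only to the eigenfunctions $\phi_i$, whose $L^4$ norms get absorbed into the geometry-dependent constants. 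In fact your decomposition yields the stronger conclusion $\|w(\mathbf{L}_n)P_nf-P_nw(\mathcal{L})f\|_2\le \mathcal{O}((\log n/n)^{1/(d+4)})\|f\|_{L^2(\mathcal{M})}$ with no $\|f\|_{L^4(\mathcal{M})}$ term at all---so your paragraph locating that term in a Bernstein estimate for $g=f^2$ is superfluous to your own argument. An added benefit of your route is that the operator-norm step transparently controls the tail $\sum_{i>\kappa}w(\lambda_i^n)\langle P_nf,\phi_i^n\rangle_2\,\phi_i^n$, which the paper's proof simply drops when it writes $w(\mathbf{L}_n)P_nf=\sum_{i=1}^\kappa(\cdots)$.
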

An numerical illustration of Theorem \ref{thm: Filter Error short} is provided in Figures \ref{fig:eps_spectral_filter_converg_plot} and \ref{fig:knn_spectral_filter_converg_plot} in Appendix \ref{app: numerics}.\footnote{Code available at \url{https://github.com/dj408/mfcn}} We plot the discretization error $\|w(\mathbf{L}_n)P_nf-\hspace{-.03in}P_nw(\mathcal{L})f\|_2$ for the spectral filter applied to a simple function $f$ constructed as the sum of two spherical harmonics (i.e., eigenfunctions).\footnote{While our method is designed for generic manifolds, here we consider the sphere since there are known formulas for the eigenfunctions which allows us to compare against ground truth.} For further details see Appendix \ref{app: numerics}.  
Next, we use Theorem \ref{thm: Filter Error short} to prove the following result showing $
\lim_{n\rightarrow\infty}\|\mathbf{x}_k^{\ell}-P_nf_k^{\ell}\|_2=0
$.

\begin{theorem}\label{thm: bound given filter bound short}
 Let $w$ and each of the $f_k$ satisfy the assumptions of  Theorem \ref{thm: bound given filter bound short}. Then, 
\begin{align*}\|\mathbf{x}_k^{\ell}-P_nf_k^{\ell}\|_2\leq \ell \left(\mathcal{O}\left(\left( \frac{\log(n)}{n}\right)^{\frac{1}{d+4}}\right)\max_k\|f_k\|_{L^2(\mathcal{M})}+\mathcal{O}\left(\left(\frac{\log(n)}{n}\right)^{1/4}\right)\max_k\|f_k\|_{L^4(\mathcal{M})}\right)\end{align*}
with probability $1-\mathcal{O}(n^{-9})$, where the constants implied by the big-$\mathcal{O}$ notation depend on the geometry of $\mathcal{M}$ and the weights used in steps (ii) and (iii) of the MFCN.
\end{theorem}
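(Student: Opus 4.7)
The plan is to proceed by induction on the layer depth $\ell$. The base case $\ell=0$ is immediate since by definition $\mathbf{x}_k^0 = P_n f_k$. For the inductive step, I would assume the bound holds at layer $\ell$ with accumulated error $\epsilon_\ell$, and then track how error propagates through the five substeps defining one MFCN layer. The main point is that, under the stated weight normalization, steps (ii)--(v) are all non-expansive in the discrete $\ell^2$ norm, so that only the spectral filtering step (i) contributes a fresh discretization error per layer, of the size bounded by Theorem~\ref{thm: Filter Error short}. This yields a recursion of the form $\epsilon_{\ell+1}\leq \epsilon_\ell+(\text{filter error})$, which unrolls to the claimed linear-in-$\ell$ bound.

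For step (i), I would use the triangle inequality to split
\begin{align*}
\|w_{j,k}(\mathbf{L}_n)\mathbf{x}_k^\ell - P_n w_{j,k}(\mathcal{L})f_k^\ell\|_2 &\leq \|w_{j,k}(\mathbf{L}_n)(\mathbf{x}_k^\ell - P_n f_k^\ell)\|_2 \\
&\quad + \|w_{j,k}(\mathbf{L}_n) P_n f_k^\ell - P_n w_{j,k}(\mathcal{L}) f_k^\ell\|_2.
\end{align*}
The first term is bounded by $\epsilon_\ell$ since $\|w_{j,k}\|_\infty\leq 1$ and the spectral theorem give that $w_{j,k}(\mathbf{L}_n)$ has operator norm at most one; the second is bounded directly by Theorem~\ref{thm: Filter Error short} in terms of $\|f_k^\ell\|_{L^2(\mathcal{M})}$ and $\|f_k^\ell\|_{L^4(\mathcal{M})}$. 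Steps (ii), (iii), and (v) are linear combinations whose learned weight matrices have, under the stated normalization, row-sum norm at most one; applied channel-wise they commute with $P_n$ and are non-expansive on discrete $\ell^2$ by Minkowski's inequality. Step (iv) is non-expansive by the standing assumption on $\sigma$ and commutes with $P_n$ since it is applied pointwise.

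The main obstacle is controlling $\|f_k^\ell\|_{L^2(\mathcal{M})}$ and $\|f_k^\ell\|_{L^4(\mathcal{M})}$ uniformly in $\ell$, since Theorem~\ref{thm: Filter Error short} is stated in terms of these norms of the filter input. For $L^2$, mirroring the non-expansivity argument on the continuum side gives $\max_k\|f_k^\ell\|_{L^2(\mathcal{M})}\leq \max_k\|f_k\|_{L^2(\mathcal{M})}$: spectral multipliers with $\|w\|_\infty\leq 1$ are $L^2$-contractions, normalized linear combinations are $L^2$-contractions by Minkowski, and $\sigma$ is pointwise non-expansive. The $L^4$ bound is more delicate because a spectral multiplier need not be bounded on $L^4(\mathcal{M})$ for arbitrary $w$ with $\|w\|_\infty\leq 1$. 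Resolving this requires invoking the finite Fourier support hypothesis on the $f_k$, which together with the spectral expansion \eqref{eqn: specconvcontinuum} allows one to bound $\|w(\mathcal{L})f\|_{L^4(\mathcal{M})}$ via norm equivalence on a $\kappa$-dimensional spectral subspace, with constants depending on $\kappa$ and on $L^\infty$ norms of the relevant eigenfunctions. Care is needed since $\sigma$ can enlarge the spectral support of the signal between layers, so bandwidth must be tracked layer-by-layer or absorbed into the constants hidden by the big-$\mathcal{O}$. A final union bound over the $\mathcal{O}(\ell J C)$ filter applications preserves the $1-\mathcal{O}(n^{-9})$ success probability.
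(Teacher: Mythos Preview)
Your approach matches the paper's: induction on depth, with the filtering step handled via the triangle-inequality split you describe (the paper packages this as a lemma bounding $\|w(\mathbf{L}_n)\mathbf{x}-P_nw(\mathcal{L})f\|_2$ by $\|\mathbf{x}-P_nf\|_2$ plus the Theorem~\ref{thm: Filter Error short} error), and separate lemmas showing the combine, cross-filter, and activation steps are non-expansive under the weight normalization $A_1^{(\ell)}A_2^{(\ell)}\leq 1$, yielding the recursion $\epsilon_{\ell+1}\leq \epsilon_\ell+\epsilon_n$.

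You have, however, flagged a genuine subtlety that the paper's proof does not address: the paper defines the per-layer filter error $\epsilon_n$ once, in terms of $\max_k\|f_k\|_{L^2}$ and $\max_k\|f_k\|_{L^4}$ for the \emph{input} functions, and then applies it at every layer without ever arguing that $\|f_k^{(\ell)}\|_{L^2}$, $\|f_k^{(\ell)}\|_{L^4}$ stay bounded by those initial quantities, nor that $f_k^{(\ell)}$ retains the finite-bandwidth hypothesis needed to invoke Theorem~\ref{thm: Filter Error short} after $\sigma$ has been applied. Your concern here is legitimate and your diagnosis of why the $L^4$ case is harder is correct; the paper simply absorbs this into the unspecified constants without comment.
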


\bibliography{main}
\newpage 
\appendix
\section{Further Details on MFCNs}
\label{app: MFCN}

Here, we provide explicit layerwise update rules for MFCNs and also add some further discussion.
In the $\ell$-th layer, given input
$F^{(\ell)}=(f_1^{(\ell)},\ldots,f_{C_\ell}^{(\ell)})$, we define $F^{(\ell+1)}=(f_1^{(\ell+1)},\ldots,f_{C_{\ell+1}}^{(\ell+1)})$ by:
\begin{align*}
\text{Filtering:}&\quad
\tilde{f}^{(\ell)}_{j,k}=w^{(\ell)}_{j,k}(\mathcal{L})f^{(\ell)}_k,\quad 1\leq j \leq J_\ell, 1\leq k\leq C_\ell\\
\text{Combine:}&\quad
g_{j,k}^{(\ell)}=\sum_{i=1}^{C_{\ell}}\tilde{f}^{(\ell)}_{j,i}\theta^{(\ell,j)}_{i,k},\quad 1\leq j\leq J_\ell, 1\leq k \leq C'_\ell\\
\text{Cross-Filter Convolution:}&\quad
\tilde{g}^{(\ell)}_{j,k}= \sum_{i=1}^{J_\ell} \alpha^{(\ell,k)}_{j,i}g_{i,k}, 1\leq j\leq J_\ell',1\leq k\leq C'_\ell\\
\text{Activation:}&\quad
h_{j,k}^{(\ell)}=\sigma^{(\ell)}\circ \tilde{g}_{j,k}^{(\ell)},\quad 1\leq j\leq J_\ell', 1\leq k \leq C'_\ell\\
\text{Reshaping:}&\quad
f^{(\ell+1)}_{(j-1)C_{\ell}+k} = h^{(\ell)}_{j,k}, 1\leq j\leq J_\ell',1\leq k\leq C'_\ell,
\end{align*}
where $C_{\ell+1}=J'_\ell C_{\ell}'$ (and we set $F_0=F,C_0=C$). A graphical depiction of these operations performed is given in Figure \ref{fig:architecture}. We note that the reshaping operator is merely included so that each layer both inputs and outputs a vector of functions, allowing for multiple layers to be stacked upon each other.

\begin{figure}[htbp]
    \centering
    \includegraphics[width=\linewidth]{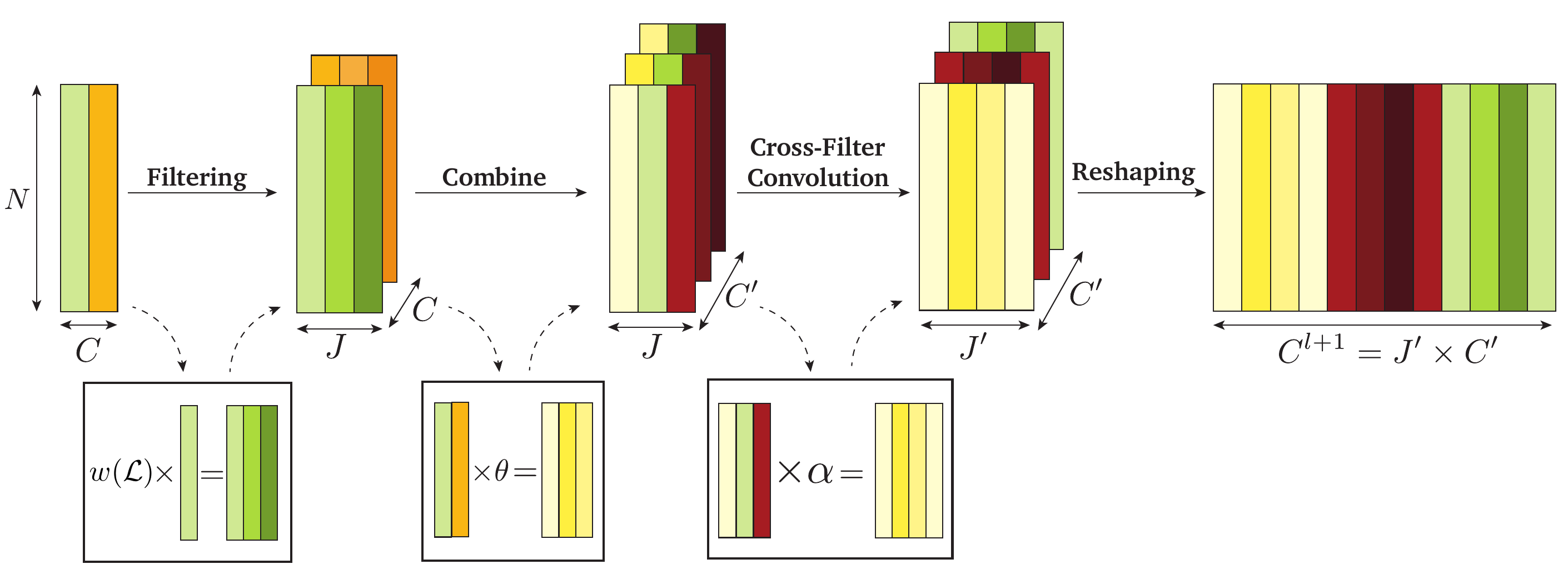}
    \caption{\textbf{Manifold Filter-Combine Network} architecture. Starting from a $C$-dimensional row vector-valued function, each layer in turn filters, combines, convolves channel-wise, applies a point-wise nonlinearity, and reshapes. (For conciseness, we do not visualize the activation step). }
    \label{fig:architecture}
\end{figure}

We also note one may effectively omit the combine step by setting each matrix $\Theta^{(\ell,j)}\coloneqq(\theta_{i,k}^{(\ell,j)})_{1\leq i,k\leq C_\ell}$ equal to the identity. 
Similarly, one may omit the cross-filter convolutions by setting the matrices $(\alpha_{j, i}^{(\ell,k)})_{1\leq i,j\leq J_\ell}$ to the identity.  
Additionally, we observe that because of the generality of our framework, it is possible to write the same network as an MFCN in more than one way.   
For instance, if one omits the cross channel convolutions, uses a shared filter bank $\{w^{(\ell)}(\mathcal{L})_j\}_{1\leq j \leq J}$ (independent of $k$) and chooses the combine step to be independent of $j$ (i.e., $\theta_{i,k}^{(\ell,j)}=\theta_{i,k}^{(\ell)}$) then we have
$
f^{(\ell+1)}_{(j-1)C_\ell+k} = \sigma^{(\ell)}\left(\sum_{i=1}^{C_\ell}w^{(\ell)}(\mathcal{L})_j\theta^{(\ell)}_{i,k}f_i\right),
$
which may also be obtained by using filters of the form 
$\widetilde{w}^{(\ell)}_{(j-1)C_\ell+k,i}(\mathcal{L})=w_{j}(\mathcal{L})\theta^{(\ell)}_{i,k}$ and using a combine step with $\tilde{\theta}_{i,k}^{(\ell,j)}=1$.

In our discrete implementation, we assume we have an  $n\times C$ data matrix $\mathbf{X}=(\mathbf{x}_1,\ldots,\mathbf{x}_C)$, where $\mathbf{x}_k=P_nf_k$, where as before, $P_nf\in\mathbb{R}^n$ is the vector defined by $P_nf(i)=\frac{1}{\sqrt{n}}f(x_i)$. We may then apply the following discrete update rules paralleling those theoretically conducted in the continuum.
\begin{align*}
\text{Filtering:}&\quad
\tilde{\mathbf{x}}^{(\ell)}_{j,k}=w^{(\ell)}_{j,k}(\mathbf{L}_n)\mathbf{x}^{(\ell)}_k,\quad 1\leq j \leq J_\ell, 1\leq k\leq C_\ell\\
\text{Combine:}&\quad
\mathbf{y}_{j,k}^{(\ell)}=\sum_{i=1}^{C_{\ell}}\tilde{\mathbf{x}}^{(\ell)}_{j,i}\theta^{(\ell,j)}_{i,k},\quad 1\leq j\leq J_\ell, 1\leq k \leq C'_\ell\\
\text{Cross-Filter Convolution:}&\quad
\tilde{\mathbf{y}}^{(\ell)}_{j,k}= \sum_{i=1}^{J_\ell} \alpha^{(\ell,k)}_{j,i}\mathbf{y}_{i,k}, 1\leq j\leq J_\ell',1\leq k\leq C'_\ell\\
\text{Activation:}&\quad
\mathbf{z}_{j,k}^{(\ell)}=\sigma\circ \tilde{\mathbf{y}}_{j,k}^{(\ell)},\quad 1\leq j\leq J_\ell, 1\leq k \leq C'_\ell\\
\text{Reshaping:}&\quad
\mathbf{x}^{(\ell+1)}_{(j-1)C_{\ell}+k} = \mathbf{z}^{(\ell)}_{j,k}, 1\leq j\leq J_\ell',1\leq k\leq C'_\ell. 
\end{align*}

We note that the filtering step, in principle, requires the eigendecomposition of the $\mathcal{L}_n$ which bears an $\mathcal{O}(n^3)$ computational cost (if all $n$ eigenvalues are utilized). However, this cost can be reduced by approximating $w_{j,k}(\lambda)$ by a polynomial which will allow one to implement the filters without directly computing an eigendecomposition.

\section{Auxilliary Results}

In order to prove Theorem \ref{thm: Filter Error short}, we must account for three sources of discretization error:
(i) The graph eigenvalue $\lambda_i^n$ does not exactly equal the manifold eigenvalue $\lambda_i$. 
    (ii) The graph eigenvector $\phi_i^n$ does not exactly equal $P_n\phi_i$, the discretization of the true continuum eigenfunction. 
     (iii) The discrete Fourier coefficients $\widehat{\mathbf{x}}(i)$ are not exactly equal to the continuum Fourier coefficients $\widehat{f}(i)$. 

In order to control the first two sources of error, we recall the following result from \cite{Calder2019}.
\begin{theorem}[Theorems 2.4,2.7 of \cite{Calder2019}]\label{thm: recall Calder results}
Assume that $G_n$ is constructed as in Section \ref{sec: MFCN discrete implementation}.
Fix $\kappa>0$ and let $\alpha_{n}=\max_{1\leq i\leq \kappa}|\lambda_i-\lambda_i^n|$ and $\beta_n=\max_{1\leq i\leq \kappa}\|\phi_i^n-P_n\phi_i\|_2$.  Then, with probability $1-\mathcal{O}(n^{-9})$, 
\begin{equation}
\label{eqn: epsilon graph constants}
    \alpha_n = \mathcal{O}\left ( \frac{\log(n)}{n}\right )^{\frac{1}{d+4}}, \quad \text{and}\quad\beta_n = \mathcal{O}\left ( \frac{\log(n)}{n}\right )^{\frac{1}{d+4}},
\end{equation}
where the implied constants depend on the geometry of the manifold $\mathcal{M}$. 
\end{theorem}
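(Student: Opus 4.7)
The plan is to follow the framework of \citet{Calder2019} for proving spectral convergence of the unnormalized $\epsilon$-graph Laplacian toward the weighted manifold Laplacian $\mathcal{L}$. The argument couples a pointwise consistency estimate for $\mathbf{L}_n$ on smooth test functions with the variational (min-max) characterization of eigenvalues and a Davis--Kahan-style perturbation bound for the eigenvectors. The bandwidth choice $\epsilon\sim(\log n/n)^{1/(d+4)}$ from Section~\ref{sec: MFCN discrete implementation} is precisely the one that equalizes bias and variance in the first step and so drives the final rate.

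First, I would establish pointwise consistency. For a fixed smooth $f:\mathcal{M}\to\mathbb{R}$ and any sample point $x_i$, I would write $\mathbf{L}_n(P_nf)(i)$ as the empirical average
\begin{align*}
\frac{d+2}{v_d\, n\epsilon^{d+2}}\sum_{j}\mathbf{1}\{\|x_j-x_i\|<\epsilon\}\,\bigl(f(x_i)-f(x_j)\bigr)/\sqrt{n},
\end{align*}
compare it to its conditional mean via Bernstein's inequality (variance term of order $(\log n/(n\epsilon^d))^{1/2}$), and Taylor-expand the conditional mean along geodesics on $\mathcal{M}$ to recover $(\mathcal{L}f)(x_i)/\sqrt{n}$ up to a deterministic bias of order $\epsilon^2$ -- this is the standard heat-kernel/diffusion-map calculation that identifies the generator of the kernel-average operator with $\mathcal{L}$ under density $\rho$. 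A union bound over the $n$ sample points, together with the choice $\epsilon\sim(\log n/n)^{1/(d+4)}$, yields the uniform rate $\mathcal{O}((\log n/n)^{1/(d+4)})$ with probability $1-\mathcal{O}(n^{-9})$ after tuning the Bernstein constant.

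Second, I would transfer pointwise consistency to the eigenvalue bound $\alpha_n$ via the Courant--Fischer min-max principle. Pushing the manifold test space $\mathrm{span}(\phi_1,\ldots,\phi_i)$ down to the graph through $P_n$ gives the ``$\le$'' inequality between Rayleigh quotients, since $\langle P_n\phi,\mathbf{L}_nP_n\phi\rangle$ approximates the continuum Dirichlet energy $\langle\phi,\mathcal{L}\phi\rangle_{L^2}$ up to the consistency error; an interpolation operator $I_n$ from graph functions to smooth manifold functions with controlled Dirichlet-energy distortion supplies the matching ``$\ge$'' inequality. Together with the fact that the first $\kappa$ eigenfunctions of $\mathcal{L}$ have geometry-dependent bounded Sobolev norms, this yields $\alpha_n=\mathcal{O}((\log n/n)^{1/(d+4)})$. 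For $\beta_n$, I would then invoke Davis--Kahan: the pointwise consistency shows $\|\mathbf{L}_nP_n\phi_i-\lambda_iP_n\phi_i\|_2$ is small, so $P_n\phi_i$ lies near the $\lambda_i^n$-eigenspace of $\mathbf{L}_n$, and the spectral gap of $\mathcal{L}$ (inherited by $\mathbf{L}_n$ among the first $\kappa$ eigenvalues once $\alpha_n$ is small enough) converts this into $\|\phi_i^n-P_n\phi_i\|_2=\mathcal{O}((\log n/n)^{1/(d+4)})$, after resolving the sign ambiguity.

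The main obstacle is constructing the interpolation operator $I_n$ needed for the upper bound in the variational argument: it must send a graph eigenvector to a smooth manifold function of comparable $L^2$ norm whose Dirichlet energy is bounded by the graph energy up to $1+o(1)$ factors. This is the most delicate piece of \citet{Calder2019}, relying on a local $\epsilon$-scale averaging/extension construction together with uniform control of the empirical density $\rho_n$ via VC-type or Bernstein concentration; a coarser extension loses the sharp $(\log n/n)^{1/(d+4)}$ rate. A secondary subtlety is the uniform-in-$i$ aspect for $i\le\kappa$, handled by a union bound (benign since $\kappa$ is fixed) together with a spectral-gap argument separating the first $\kappa$ eigenpairs from the rest.
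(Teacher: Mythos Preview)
Your outline is a reasonable sketch of the spectral-convergence machinery in \citet{Calder2019}, but there is nothing in the present paper to compare it against: Theorem~\ref{thm: recall Calder results} is not proved here. It is simply quoted from \citet{Calder2019} (Theorems~2.4 and~2.7) and used as a black box to supply the rates for $\alpha_n$ and $\beta_n$ in the proof of Theorem~\ref{thm: Filter Error short}. The paper's own contribution begins with Proposition~\ref{lem: bernstein} and the proof of Theorem~\ref{thm: Filter Error short}, which take these eigenvalue/eigenvector convergence rates as given.

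So the ``paper's approach'' to this statement is to cite it; your proposal is essentially a recap of how Calder--Garc\'ia Trillos prove it, and at that level of detail it is broadly correct (pointwise consistency via bias--variance with the $\epsilon\sim(\log n/n)^{1/(d+4)}$ balance, min--max for eigenvalues with an interpolation operator for the reverse inequality, and a residual/Davis--Kahan argument for eigenvectors). If anything, you slightly understate the eigenvector step: in \citet{Calder2019} the $\ell^2$ eigenvector bound is obtained not from a raw Davis--Kahan on $\mathbf{L}_n$ alone but by combining the residual $\|\mathbf{L}_nP_n\phi_i-\lambda_iP_n\phi_i\|$ with an a priori $L^\infty$ bound on $\phi_i^n$ coming from elliptic regularity of the interpolated eigenvector, which is what lets the pointwise consistency (a sup-norm estimate) feed into an $\ell^2$ conclusion. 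But none of this is relevant to the present paper, which only consumes the result.
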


Additionally, we need the following theorem, which is a consequence of Bernstein's inequality, which will help us prove that $\lim_{n\rightarrow\infty}\widehat{P_nf}(i)=\widehat{f}(i)$ (since $\widehat{f}(i)=\langle f,\phi_i \rangle_{L^2(\mathcal{M})}$ and $\widehat{\mathbf{x}}(i)=\langle\mathbf{x},\phi_i^n\rangle_2$). 

\begin{proposition}\label{lem: bernstein}
Let $f$ and $g$ be continuous functions defined on $\mathcal{M}$. Then for sufficiently large $n$, with probability at least $1 - \frac{2}{n^{9}}$,  we have
    \[|\langle P_n f, P_n g\rangle_2 - \langle f, g \rangle_{L^2(\mathcal{M})}| \leq 6 \sqrt{\frac{\log(n)}{n}}\|f\|_{L^4(\mathcal{M})}\|g\|_{L^4(\mathcal{M})},\]
    where $\|f\|_{L^4(\mathcal{M})}=\left(\int_{\mathcal{M}}|f|^4d\mu\right)^{1/4}$.
\end{proposition}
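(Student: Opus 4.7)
}
The starting point is the observation that, because $P_nf(i)=f(x_i)/\sqrt{n}$, one has
\[
\langle P_nf,P_ng\rangle_2 \;=\; \frac{1}{n}\sum_{i=1}^n f(x_i)g(x_i),
\]
so the quantity of interest is the deviation of an empirical mean from its expectation. Setting $Y_i := f(x_i)g(x_i) - \langle f,g\rangle_{L^2(\mathcal{M})}$, the $Y_i$ are i.i.d., mean zero, and the claim becomes a tail bound on $|\tfrac{1}{n}\sum_i Y_i|$. The natural tool is the standard one-sided Bernstein inequality applied to the (bounded) random variables $Y_i$, which gives, for any $t>0$,
\[
\Pr\!\left(\Big|\tfrac{1}{n}\textstyle\sum_{i=1}^n Y_i\Big|>t\right)\;\leq\; 2\exp\!\left(-\frac{n t^{2}/2}{\sigma^{2}+Mt/3}\right),
\]
where $\sigma^{2}\geq\mathrm{Var}(Y_1)$ and $M\geq\|Y_1\|_{\infty}$.

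The next step is to bound $\sigma^2$ in terms of $L^4$ norms, which is the only slightly non-obvious move and is the key to the advertised form of the inequality. By Cauchy--Schwarz in $L^2(\mathcal{M})$,
\[
\mathrm{Var}(Y_1)\;\leq\;\mathbb{E}\bigl[(fg)^2\bigr]\;=\;\int_{\mathcal{M}}f^2g^2\,d\mu\;\leq\;\|f\|_{L^4(\mathcal{M})}^{2}\,\|g\|_{L^4(\mathcal{M})}^{2},
\]
so we may take $\sigma^{2}=\|f\|_{L^4}^{2}\|g\|_{L^4}^{2}$. For the almost-sure bound, continuity of $f,g$ on the compact manifold $\mathcal{M}$ guarantees a finite $M:=2\|f\|_{\infty}\|g\|_{\infty}$.

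Finally I would plug in $t = 6\sqrt{\log(n)/n}\,\|f\|_{L^4}\|g\|_{L^4}$ and verify that the Bernstein exponent is at most $-9\log n$, which yields the claimed failure probability $2/n^{9}$. Concretely, $nt^{2}/2 = 18\log(n)\,\sigma^{2}$, so it suffices to show $\sigma^{2}+Mt/3\leq 2\sigma^{2}$, i.e.\ that the quadratic (variance) term dominates the linear (sup-norm) term. Since $Mt/3\to 0$ as $n\to\infty$ while $\sigma^{2}$ is a fixed constant, this holds for all $n$ larger than a threshold depending on $\|f\|_\infty,\|g\|_\infty$ and $\|f\|_{L^4},\|g\|_{L^4}$, which is exactly the ``sufficiently large $n$'' hypothesis. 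No step is genuinely difficult; the only mild subtlety is making sure the $L^4$ form, rather than an $L^\infty$ or $L^2$ form, is what emerges, which is exactly what the Cauchy--Schwarz estimate on $\mathrm{Var}(Y_1)$ delivers.
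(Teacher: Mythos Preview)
Your proposal is correct and follows essentially the same route as the paper: express $\langle P_nf,P_ng\rangle_2$ as an empirical mean, apply Bernstein's inequality, bound the variance via Cauchy--Schwarz to obtain the $L^4$ form, and absorb the sup-norm term into the ``sufficiently large $n$'' hypothesis so that the exponent is at most $-9\log n$. The only cosmetic difference is that the paper keeps the true variance $\sigma^2$ in the choice of threshold and applies the Cauchy--Schwarz bound $\sigma^2\le\|f\|_{L^4}^2\|g\|_{L^4}^2$ at the very end, whereas you substitute this upper bound for $\sigma^2$ from the outset.
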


\begin{proof}

Define random variables $\{X_i\}_{i=1}^n$ by $X_i \coloneqq f(x_i)g(x_i)$ and note that by definition we have 
\[\langle P_n f, P_n g \rangle_2 = \frac{1}{n} \sum_{i = 1}^n f(x_i)g(x_i) = \frac{1}{n}\sum_{i=1}^n X_i.\]
 Since the $x_i$ are sampled i.i.d.~with density $\rho$, we have 
\[\mathbb{E}[X_i] = \int_\mathcal{M}f(x)g(x)\rho(x)dx=\langle f, g \rangle_{L^2(\mathcal{M})}.\]
Therefore, letting $\sigma^2 \coloneqq \mathbb{E}[X_i^2] - \mathbb{E}[X_i]^2$ and $M \coloneqq \|fg - \mathbb{E}[X_i]\|_{L^\infty(\mathcal{M})}$, we see that by Bernstein's inequality, we have
\begin{align*}
    \mathbb{P}(|\langle P_n f, P_n g \rangle_2 - \langle f, g \rangle_{L^2(\mathcal{M})}| > \eta ) &= \mathbb{P}\left(\left | \frac{1}{n}\sum_{i=1}^n X_i - \frac{1}{n} \sum_{i=1}^n \mathbb{E}[X_i]\right | > \eta \right) \\
    &= \mathbb{P}\left(\left | \sum_{i=1}^n X_i - \sum_{i=1}^n \mathbb{E}[X_i]\right | > n\eta \right)\\
    &\leq 2 \exp \left (- \frac{\frac{1}{2}n\eta^2}{\sigma^2 + \frac{1}{3}M\eta}\right ).
\end{align*}
Setting $\eta = 6 \sqrt{\dfrac{\sigma^2 \log(n)}{n}}$, we see that for $n$ large enough so that $1 + \dfrac{M\eta}{3\sigma^2} < 2$, we have
\begin{align*}
    \mathbb{P}(|\langle P_n f, P_n g \rangle_2 - \langle f, g \rangle_{L^2(\mathcal{M})}| > \eta ) &\leq 2 \exp \left (- \frac{18\sigma^2\log(n)}{\sigma^2 + \frac{1}{3}M\eta}\right ) \\
    &= 2 \exp \left (- \frac{18\log(n)}{1 + \frac{M\eta}{3\sigma^2}}\right ) \\
    &< 2 \exp \left ( -9\log(n)\right ) \\
    &= \frac{2}{n^9}.
\end{align*}
We note that $\sigma^2\leq \mathbb{E}[X_i^2]=\langle f^2,g^2\rangle_{L^2(\mathcal{M})}$.
Therefore, by the Cauchy-Schwarz inequality, with probability at least $1 - \frac{2}{n^9}$, we have
\begin{align*}
    |\langle P_n f, P_n g \rangle_2 - \langle f, g \rangle_{L^2(\mathcal{M})}| &\leq 6  \sqrt{\frac{\sigma^2\log(n)}{n}} \\
    &\leq 6 \sqrt{\frac{\log(n)}{n}} \sqrt{\langle f^2,g^2\rangle_{L^2(\mathcal{M})}} \\
    &\leq 6 \sqrt{\frac{\log(n)}{n}} \|f\|_{L^4(\mathcal{M})} \|g\|_{L^4(\mathcal{M})}.
\end{align*}

\end{proof}

\section{The proof of Theorem \ref{thm: Filter Error short}}

Let $\alpha_{n}=\max_{1\leq i\leq \kappa}|\lambda_i-\lambda_i^n|$ and $\beta_n=\max_{1\leq i\leq \kappa}\|\phi_i^n-P_n\phi_i\|_2$.  Recall from Theorem \ref{thm: recall Calder results}, that with probability $1-\mathcal{O}(n^{-9})$ we have
\begin{equation}
\label{eqn: epsilon graph constants in proof}
    \alpha_n = \mathcal{O}\left ( \frac{\log(n)}{n}\right )^{\frac{1}{d+4}}, \quad \text{and}\quad\beta_n = \mathcal{O}\left ( \frac{\log(n)}{n}\right )^{\frac{1}{d+4}}.
\end{equation}
Let 
$\gamma_n=6\sqrt{\frac{\log(n)}{n}}$ be the constant appearing in Proposition \ref{lem: bernstein}.
We now compute 
\begin{align}
&\|w(\mathbf{L}_n)P_nf-P_nw(\mathcal{L})f\|_2\nonumber\\
= &\left\| \sum_{i=1}^\kappa w(\lambda_i^n)\langle P_nf,\phi_i^n\rangle_2\phi_i^n - \sum_{i=1}^\kappa w(\lambda_i)\langle f,\phi_i\rangle_\mathcal{M}P_n\phi_i\right\|_2\nonumber\\
\leq&\left\| \sum_{i=1}^\kappa (w(\lambda_i^n)-w(\lambda_i))\langle P_nf,\phi_i^n\rangle_2\phi_i^n\right\|_2+\left\|\sum_{i=1}^\kappa w(\lambda_i)(\langle P_nf,\phi_i^n\rangle_2\phi_i^n- \langle f,\phi_i\rangle_\mathcal{M}P_n\phi_i)\right\|_2.\label{eqn: different eigenvectorsv2}
\end{align}

To bound the first term from \eqref{eqn: different eigenvectorsv2}. We use the assumption that $w$ is normalized Lipschitz, the triangle inequality, and the Cauchy-Schwarz inequality to see that if $n$ is large enough so that $\gamma_n\leq 1$, we have 
\begin{align}
    \left\| \sum_{i=1}^\kappa (w(\lambda_i^n) - w(\lambda_i))\langle P_nf,\phi_i^n\rangle_2\phi_i^n\right\|_2
    \leq& \max_{1\leq i \leq \kappa} |w(\lambda_i^n)- w(\lambda_i)| \sum_{i=1}^\kappa \|P_n f\|_2 \|\phi_i^n\|^2_2\nonumber\\
    \leq& \alpha_n \sum_{i=1}^\kappa \|P_n f\|_2 \|\phi_i^n\|^2_2\nonumber\\
    \leq& \kappa\alpha_n  \|P_n f\|_2\nonumber\\ 
    \leq& \kappa  \left(\alpha_n\|f\|_{L^2(\mathcal{M})}+\gamma_n\|f\|_{L^4(\mathcal{M})}\right),\nonumber
\end{align}
where we use the fact that $\|\phi_i^n\|_2^2=1$ and that 
\begin{equation}\label{eqn: pnboundv2}\|P_nf\|_2\leq \left(\|f\|_{L^2(\mathcal{M})}^2 + \gamma_n^2\|f\|_{L^4(\mathcal{M})}^2\right)^{1/2}\leq \|f\|_{L^2(\mathcal{M})}+\gamma_n\|f\|_{L^4(\mathcal{M})}.\end{equation}

Now, turning our attention to the second term from \eqref{eqn: different eigenvectorsv2}, we have 
\begin{align}
&\left\|\sum_{i=1}^\kappa w(\lambda_i)(\langle P_nf,\phi_i^n\rangle_2\phi_i^n- \langle f,\phi_i\rangle_{L^2(\mathcal{M})}P_n\phi_i)\right\|_2\nonumber\\
    \leq& \left\|\sum_{i=1}^\kappa w(\lambda_i)\langle P_nf,\phi_i^n\rangle_2(\phi_i^n-P_n\phi_i)\right\|_2
+\left\|\sum_{i=1}^\kappa w(\lambda_i)(\langle P_nf,\phi_i^n\rangle_2- \langle f,\phi_i\rangle_{L^2(\mathcal{M})}P_n\phi_i\right\|_2\label{eqn: use Hoeffding this timev2}.
\end{align}

By definition, we have $\|\phi_i^n-P_n\phi_i\|_2\leq \beta_n$.
Therefore, since $\|w\|_{L^\infty([0,\infty))]}\leq 1$, we see that if $n$ is large enough so $\beta_n\leq 1$ then
\begin{align}
\left\|\sum_{i=1}^\kappa w(\lambda_i)\langle P_nf,\phi_i^n\rangle_2(\phi_i^n-P_n\phi_i)\right\|_2
    &\leq \kappa \max_{1\leq i\leq \kappa} |\langle P_nf,\phi_i^n\rangle_2|\|\phi_i^n-P_n\phi_i\|_2\nonumber\\
    &\leq \kappa\beta_n\|P_nf\|_2\nonumber\\
    &\leq \kappa\left (\beta_n\|f\|_{L^2(\mathcal{M})}+\gamma_n\|f\|_{L^4(\mathcal{M})}\right ),\label{eqn: evec diffv2}
\end{align}
where the final inequality follows from \eqref{eqn: pnboundv2}.
Meanwhile, the second term from \eqref{eqn: use Hoeffding this timev2} can be bounded by 
\begin{align*}
&\left\|\sum_{i=1}^\kappa w(\lambda_i)(\langle P_nf,\phi_i^n\rangle_2- \langle f,\phi_i\rangle_\mathcal{M})P_n\phi_i\right\|_2\\
\leq&\sum_{i=1}^\kappa |w(\lambda_i)| |\langle P_nf,\phi_i^n\rangle_2- \langle f,\phi_i\rangle_\mathcal{M}|\|P_n\phi_i\|_2\\   \leq&\sum_{i=1}^\kappa |\langle P_nf,\phi_i^n\rangle_2- \langle f,\phi_i\rangle_\mathcal{M}|\|P_n\phi_i\|_2\\
\leq&\sum_{i=1}^\kappa |\langle P_nf,\phi_i^n\rangle_2-\langle P_nf,P_n\phi_i\rangle_2|\|P_n\phi_i\|_2+\sum_{i = 1}^\kappa |\langle P_nf,P_n\phi_i\rangle_2- \langle f,\phi_i\rangle_\mathcal{M}|\|P_n\phi_i\|_2.
\end{align*}
 By the Cauchy-Schwarz inequality, Proposition \ref{lem: bernstein},
 \eqref{eqn: epsilon graph constants in proof},  \eqref{eqn: pnboundv2}, and the assumption that $n$ is large enough so that $\beta_n\leq 1$, we have
\begin{align*}
    |\langle P_nf,\phi_i^n\rangle_2-\langle P_nf,P_n\phi_i\rangle_2| &\leq \| P_nf\|_2 \|\phi_i^n-P_n\phi_i\|_2 \\
    &\leq \beta_n\left(\|f\|_{L^2(\mathcal{M})}+\gamma_n\|f\|_{L^4(\mathcal{M})}\right) \\
    &\leq\left(\beta_n\|f\|_{L^2(\mathcal{M})}+\gamma_n\|f\|_{L^4(\mathcal{M})}\right).
\end{align*}
And also by Proposition \ref{lem: bernstein} and \eqref{eqn: pnboundv2} we have 
\begin{align*}
    |\langle P_nf,P_n\phi_i\rangle_2- \langle f,\phi_i\rangle_2| \leq \gamma_n^2\|f\|_{L^4(\mathcal{M})}\|\phi_i\|_{L^4(\mathcal{M})},\quad \text{and} \quad \|P_n\phi_i\|_2\leq 1+\gamma_n\|\phi_i\|_{L^4(\mathcal{M})}.\end{align*}
Since $\kappa$ is fixed and  $\lim_{n\rightarrow\infty}\gamma_n=0,$ for sufficiently large $n$ we have $\gamma_n\|\phi_i\|_{L^4(\mathcal{M})}\leq 1$ for all $i\leq \kappa.$ This implies 
\begin{align*}
    |\langle P_nf,P_n\phi_i\rangle_2- \langle f,\phi_i\rangle_2| \leq \gamma_n\|f\|_{L^4(\mathcal{M})},\quad\text{and}\quad
\|P_n\phi_i\|_2\leq 1+\gamma_n\|\phi_i\|_{L^4(\mathcal{M})}\leq 2.
\end{align*}

Therefore, the second term from \eqref{eqn: use Hoeffding this timev2} can be bounded by

\begin{align}
    &\left\|\sum_{i=1}^\kappa w(\lambda_i)(\langle P_nf,\phi_i^n\rangle_2 -\langle f,\phi_i\rangle_\mathcal{M})P_n\phi_i\right\|_2 \nonumber\\
    \leq&\sum_{i=1}^\kappa |\langle P_nf,\phi_i^n\rangle_2-\langle P_nf,P_n\phi_i\rangle_2|\|P_n\phi_i\|_2 \nonumber +\sum_{i=1}^\kappa|\langle P_nf,P_n\phi_i\rangle_2- \langle f,\phi_i\rangle_2|\|P_n\phi_i\|_2\nonumber\\
\leq&\sum_{i=1}^\kappa\left(\beta_n\|f\|_{L^2(\mathcal{M})}+\gamma_n\|f\|_{L^4(\mathcal{M})}\right)\|P_n\phi_i\|_2 +\sum_{i=1}^\kappa  \gamma_n\|f\|_{L^4(\mathcal{M})}\|P_n\phi_i\|_2\nonumber\\
\leq&4\kappa\left( \beta_n\|f\|_{L^2(\mathcal{M})}+\gamma_n\|f\|_{L^4(\mathcal{M})}\right).\label{eqn: inner pdt diffv2}
\end{align}

Therefore, combining Equations \eqref{eqn: different eigenvectorsv2} through \eqref{eqn: inner pdt diffv2} yields 
\begin{align*}
&\|w(\mathbf{L}_n)P_nf-P_nw(\mathcal{L})f\|_2\nonumber\\
\leq &\left\| \sum_{i=1}^\kappa (w(\lambda_i^n) - w(\lambda_i))\langle P_nf,\phi_i^n\rangle_2\phi_i^n\right\|_2+\left\|\sum_{i=1}^\kappa w(\lambda_i)(\langle P_nf,\phi_i^n\rangle_2\phi_i^n-\langle f,\phi_i\rangle_\mathcal{M}P_n\phi_i)\right\|_2\\
\leq& \kappa  (\alpha_n\|f\|_{L^2(\mathcal{M})}+\gamma_n\|f\|_{L^4(\mathcal{M})})+ 5\kappa \left(\beta_n\|f\|_{L^2(\mathcal{M})}+\gamma_n\|f\|_{L^4(\mathcal{M})}\right)\\
\leq &
6\kappa\left((\alpha_n+\beta_n)\|f\|_{L^2(\mathcal{M})}+\gamma_n\|f\|_{L^4(\mathcal{M})}\right).
\end{align*}
Plugging in the values of $\alpha_n, \beta_n,$ and $\gamma_n$ completes the proof.
\section{The Proof of Theorem \ref{thm: bound given filter bound short}}

\begin{lemma}\label{lem: x not f}
Under the assumptions of Theorem \ref{thm: bound given filter bound short}, we have  
\begin{align*}
\|w(\mathbf{L}_n)\mathbf{x}-P_nw(\mathcal{L})f\|_2
\leq& \|\mathbf{x}-P_nf\|_2 + 
6\kappa\left((\alpha_n+\beta_n)\|f\|_{L^2(\mathcal{M})}+\gamma_n\|f\|_{L^4(\mathcal{M})}\right),\end{align*} where
$\alpha_n, \beta_n$ and $\gamma_n$ are as in the proof of Theorem \ref{thm: Filter Error short}
for all continuous functions $f$ and all $\mathbf{x}\in\mathbb{R}^n.$ (We do not assume $\mathbf{x}=P_nf$ here.)
\end{lemma}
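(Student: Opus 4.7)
The plan is to reduce the lemma to Theorem \ref{thm: Filter Error short} via a single application of the triangle inequality, absorbing the extra freedom in $\mathbf{x}$ into an operator-norm bound on $w(\mathbf{L}_n)$. Specifically, I would insert the intermediate quantity $w(\mathbf{L}_n)P_nf$ and write
\begin{align*}
\|w(\mathbf{L}_n)\mathbf{x}-P_nw(\mathcal{L})f\|_2
\leq \|w(\mathbf{L}_n)\mathbf{x}-w(\mathbf{L}_n)P_nf\|_2 + \|w(\mathbf{L}_n)P_nf-P_nw(\mathcal{L})f\|_2.
\end{align*}
The second term is exactly the quantity bounded by Theorem \ref{thm: Filter Error short}, so with probability $1-\mathcal{O}(n^{-9})$ it is at most $6\kappa\bigl((\alpha_n+\beta_n)\|f\|_{L^2(\mathcal{M})}+\gamma_n\|f\|_{L^4(\mathcal{M})}\bigr)$.

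For the first term, the key observation is that $w(\mathbf{L}_n)$ is defined spectrally as $w(\mathbf{L}_n)=\sum_i w(\lambda_i^n)\phi_i^n(\phi_i^n)^{\top}$, which is a self-adjoint operator on $\mathbb{R}^n$ whose eigenvalues are exactly $\{w(\lambda_i^n)\}_{i=1}^n$. Since $\|w\|_{L^\infty([0,\infty))}\leq 1$ by hypothesis, this gives the operator-norm bound $\|w(\mathbf{L}_n)\|_{\text{op}}\leq 1$, and therefore
\begin{align*}
\|w(\mathbf{L}_n)\mathbf{x}-w(\mathbf{L}_n)P_nf\|_2
= \|w(\mathbf{L}_n)(\mathbf{x}-P_nf)\|_2 \leq \|\mathbf{x}-P_nf\|_2.
\end{align*}
Combining the two bounds gives precisely the statement of the lemma.

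There is no real obstacle here; the lemma is essentially the triangle-inequality decoupling of the ``closeness of $\mathbf{x}$ to $P_nf$'' error from the ``filter discretization'' error, and the only nontrivial point is recognizing that the assumption $\|w\|_\infty\leq 1$ in Theorem \ref{thm: Filter Error short} doubles as an operator-norm bound on $w(\mathbf{L}_n)$, which is what allows the first term to be controlled by $\|\mathbf{x}-P_nf\|_2$ with constant exactly $1$ (so that the bound can be iterated cleanly across layers in the proof of Theorem \ref{thm: bound given filter bound short}).
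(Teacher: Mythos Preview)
Your proposal is correct and matches the paper's proof essentially line for line: the paper also inserts $w(\mathbf{L}_n)P_nf$, applies Theorem~\ref{thm: Filter Error short} to the second term, and bounds the first term by expanding $w(\mathbf{L}_n)(\mathbf{x}-P_nf)$ in the eigenbasis $\{\phi_i^n\}$ and using $|w(\lambda_i^n)|\leq 1$, which is exactly your operator-norm observation written out explicitly.
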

\begin{proof}
We first observe that since $\|w\|_{L^\infty([0,\infty))} \leq 1,$ we have \begin{align}
\|w(\mathbf{L}_n)\mathbf{x}-w(\mathbf{L}_n)P_nf\|_2 &=\|w(\mathbf{L}_n)(\mathbf{x}-P_nf)\|_2 \nonumber\\&=\|\sum_{i=1}^n w(\lambda_i^n) \langle \mathbf{x}-P_nf, \phi_i^n\rangle_2\phi_i^n\|_2\\
&=\left(\sum_{i=1}^n |w(\lambda_i^n)|^2|\langle \mathbf{x}-P_nf, \phi_i^n\rangle_2|^2\right)^{1/2}\nonumber\\
&\leq \left(\sum_{i=1}^n |\langle \mathbf{x}-P_nf, \phi_i^n\rangle_2|^2\right)^{1/2}\nonumber\\
&=\|\mathbf{x}-P_nf\|_2.\label{eqn: easy recursion termv2}
\end{align}
Therefore, by the triangle inequality, together with Theorem \ref{thm: Filter Error short}, we we have \begin{align*}
\|w(\mathbf{L}_n)\mathbf{x}-P_nw(\mathcal{L})f\|_2
\leq& 
\|w(\mathbf{L}_n)\mathbf{x}-w(\mathbf{L}_n)P_nf\|_2 +
\|w(\mathbf{L}_n)P_nf-P_nw(\mathcal{L})f\|_2\\
\leq& \|\mathbf{x}-P_nf\|_2 + 
6\kappa\left((\alpha_n+\beta_n)\|f\|_{L^2(\mathcal{M})}+\gamma_n\|f\|_{L^4(\mathcal{M})}\right)\end{align*}
as desired.
\end{proof}

\begin{proof}[The proof of Theorem \ref{thm: bound given filter bound short}]

Let $$\epsilon_{n}=\mathcal{O}\left(\left( \frac{\log(n)}{n}\right)^{\frac{1}{d+4}}\right)\max_k\|f_k\|_{L^2(\mathcal{M})}+\mathcal{O}\left(\left(\frac{\log(n)}{n}\right)^{1/4}\right)\max_k\|f_k\|_{L^4(\mathcal{M})},
$$
so that by Lemma \ref{lem: x not f}, together with the definitions of $\alpha_n, \beta_n,$ and $\gamma_n$ we have 
$$
\|w(\mathbf{L}_n)\mathbf{x}-P_nw(\mathcal{L})f_k\|_2\leq \epsilon_n.
$$
Let $A_1^{(\ell)}=\max_{j,k}(|\sum_{i=1}^{C_{\ell}} |\theta_{i,k}^{(\ell,j)}|),$ $A_2^{(\ell)}=\max_{j,k}(\sum_{i=1}^{J_{\ell}} |\alpha_{j, i}^{(\ell,k)}|)$, where  $\theta_{i,k}^{(\ell,j)}$ and $\alpha_{j, i}^{(\ell,k)}$ are the weights used in the combination step (step (ii)) and the cross-filter step (step (iii) of the MFCN. (See Appendix \ref{app: MFCN}.)
The following lemma bounds the error induced in the non-filtering steps of the discrete MFCN implementation.

\begin{lemma}\label{lem: other step error}
The errors induced by the non-filtering steps of our network may be bounded by 
\begin{align}
   \| \mathbf{y}_{j,k}^{(\ell)}-P_ng_{j,k}^{(\ell)}\|_2
&\leq \max_{1\leq i\leq C_{\ell}}\|\tilde{\mathbf{x}}^{(\ell)}_{j,k}-P_n\tilde{f}^{(\ell)}_{j,k}\|_2\sum_{i=1}^{C_{\ell}} |\theta_{i,k}^{(\ell,j)}|,\label{eqn: theta error}
\\
\| \tilde{\mathbf{y}}_{j,k}^{(\ell)}-P_n\tilde{g}_{j,k}^{(\ell)}\|_2
&\leq \max_{1\leq i\leq J_{\ell}}\|\mathbf{y}^{(\ell)}_{j,k}-P_n g^{(\ell)}_{j,k}\|_2\sum_{i=1}^{J_{\ell}} |\alpha_{j,i}^{(\ell,k)}|.\label{eqn: alpha error}\\
\|\mathbf{z}^{(\ell)}_{j,k}-P_nh^{(\ell)}_{j,k}\|_2&\leq\|\tilde{\mathbf{y}}^{(\ell)}_{j,k}-P_n\tilde{g}^{(\ell)}_{j,k}\|_2.\label{eqn: sigma error}    
\end{align}
(where the notation is as in Appendix \ref{app: MFCN}).
\end{lemma}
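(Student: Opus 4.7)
The plan is to prove the three inequalities \eqref{eqn: theta error}--\eqref{eqn: sigma error} independently, each by unwinding the relevant definition from Appendix~\ref{app: MFCN} and then exploiting the linearity of $P_n$ together with the triangle inequality (for the first two) and the nonexpansiveness of $\sigma$ (for the third).

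For the combine bound \eqref{eqn: theta error}, I would start from the continuum definition $g_{j,k}^{(\ell)}=\sum_{i=1}^{C_{\ell}}\tilde{f}^{(\ell)}_{j,i}\theta^{(\ell,j)}_{i,k}$, apply $P_n$ (which is linear since it evaluates at the sample points and rescales by $1/\sqrt{n}$), and subtract the analogous discrete identity $\mathbf{y}_{j,k}^{(\ell)}=\sum_{i=1}^{C_{\ell}}\tilde{\mathbf{x}}^{(\ell)}_{j,i}\theta^{(\ell,j)}_{i,k}$ to obtain
$$\mathbf{y}_{j,k}^{(\ell)}-P_n g_{j,k}^{(\ell)}=\sum_{i=1}^{C_{\ell}}\theta^{(\ell,j)}_{i,k}\bigl(\tilde{\mathbf{x}}^{(\ell)}_{j,i}-P_n\tilde{f}^{(\ell)}_{j,i}\bigr).$$
Applying the triangle inequality in $\ell^2$ and then pulling out the maximum over $i$ of the error norms yields the claimed bound. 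The cross-filter bound \eqref{eqn: alpha error} is verbatim the same argument after replacing the combine weights $\theta^{(\ell,j)}_{i,k}$ by the cross-filter weights $\alpha^{(\ell,k)}_{j,i}$ and the $C_\ell$-indexed sum by the $J_\ell$-indexed one.

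For the activation bound \eqref{eqn: sigma error}, the one nontrivial point is that the sampling map must commute with the pointwise nonlinearity in the sense $P_n(\sigma\circ \tilde{g}^{(\ell)}_{j,k})=\sigma(P_n\tilde{g}^{(\ell)}_{j,k})$. This holds under the standing assumption that $\sigma$ is applied entry-wise and is compatible with the $1/\sqrt{n}$ rescaling built into $P_n$ (for instance, if $\sigma$ is positively $1$-homogeneous as with ReLU, or by taking this as the definition of the discrete activation). Once this identification is in place, the inequality reduces to
$$\|\sigma(\tilde{\mathbf{y}}^{(\ell)}_{j,k})-\sigma(P_n\tilde{g}^{(\ell)}_{j,k})\|_2\leq \|\tilde{\mathbf{y}}^{(\ell)}_{j,k}-P_n\tilde{g}^{(\ell)}_{j,k}\|_2,$$
which is immediate from the coordinate-wise $1$-Lipschitz (nonexpansive) hypothesis on $\sigma$.

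The main obstacle is the bookkeeping in the third bound: relating the discretization of $\sigma\circ\tilde g$ to $\sigma$ acting on the discrete vector $P_n\tilde g$ in a way that is consistent with the normalization chosen for $P_n$. The first two bounds are routine linear-algebra exercises, and with \eqref{eqn: sigma error} in hand, the three estimates chain together through the five MFCN steps, so that combined with Lemma~\ref{lem: x not f} they drive the induction on $\ell$ that finishes the proof of Theorem~\ref{thm: bound given filter bound short}.
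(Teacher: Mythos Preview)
Your proposal is correct and follows essentially the same route as the paper: expand the definitions, use linearity of $P_n$ and the triangle inequality for \eqref{eqn: theta error} and \eqref{eqn: alpha error}, and use the $1$-Lipschitz property of $\sigma$ coordinatewise for \eqref{eqn: sigma error}.

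One point worth noting: you are actually more careful than the paper about the activation bound. The paper's computation silently drops and then reinserts the $1/\sqrt{n}$ factor in $P_n$, writing $(P_n h^{(\ell)}_{j,k})(i)=h^{(\ell)}_{j,k}(x_i)$ and later $\sum_i|\tilde{\mathbf{y}}^{(\ell)}_{j,k}(i)-\tilde{g}^{(\ell)}_{j,k}(x_i)|^2=\|\tilde{\mathbf{y}}^{(\ell)}_{j,k}-P_n\tilde{g}^{(\ell)}_{j,k}\|_2^2$, which only matches if one tacitly identifies $P_n(\sigma\circ\tilde g)$ with $\sigma(P_n\tilde g)$. You flag this explicitly and observe that it is justified when $\sigma$ is positively $1$-homogeneous (e.g.\ ReLU) or when one simply \emph{defines} the discrete activation to act on the already-normalized vector. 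That clarification is a genuine improvement over the paper's write-up; otherwise the arguments coincide.
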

\begin{proof}
To verify \eqref{eqn: theta error}, we observe that 
    \begin{align*}
   \| \mathbf{y}_{j,k}^{(\ell)}-P_ng_{j,k}^{(\ell)}\|_2
&=\left\|\sum_{i=1}^{C_{\ell}}\tilde{\mathbf{x}}^{(\ell)}_{j,k}\theta_{i,k}^{(\ell,j)}-P_n\tilde{f}^{(\ell)}_{j,k}\theta_{i,k}^{(\ell,j)}\right\|_2\\
&\leq \sum_{i=1}^{C_{\ell}}|\theta_{i,k}^{(\ell,j)}|\|\tilde{\mathbf{x}}^{(\ell)}_{j,k}-P_n\tilde{f}^{(\ell)}_{j,k}\|_2\nonumber\\
&\leq  \max_{1\leq i\leq C_{\ell}}\|\tilde{\mathbf{x}}^{(\ell)}_{j,k}-P_n\tilde{f}^{(\ell)}_{j,k}\|_2\sum_{i=1}^{C_{\ell}} |\theta_{i,k}^{(\ell,j)}|.
   \end{align*}
The proof of \eqref{eqn: alpha error} is identical to the proof of \eqref{eqn: theta error}. For \eqref{eqn: sigma error}, we see that 
since $\sigma$ is non-expansive we have 
\begin{align*}
\|\mathbf{z}^{(\ell)}_{j,k}-P_nh^{(\ell)}_{j,k}\|^2_2
&=\sum_{i=1}^n|
(\mathbf{z}^{(\ell)}_{j,k})(i)-(P_nh^{(\ell)}_{j,k})(i)|^2\\
&=\sum_{i=1}^n|
(\mathbf{z}^{(\ell)}_{j,k})(i)-h^{(\ell)}_{j,k}(x_i)|^2\\
&=\sum_{i=1}^n|
\sigma((\tilde{\mathbf{y}}^{(\ell)}_{j,k})(i))-\sigma(\tilde{g}^{(\ell)}_{j,k}(x_i))|^2\\
&\leq\sum_{i=1}^n|
(\tilde{\mathbf{y}}^{(\ell)}_{j,k})(i)-\tilde{g}^{(\ell)}_{j,k}(x_i)|^2\\
&=\|\tilde{\mathbf{y}}^{(\ell)}_{j,k}-P_n\tilde{g}^{(\ell)}_{j,k}\|^2_2.
\end{align*}

\end{proof}
Now, returning to the proof of Theorem \ref{thm: bound given filter bound short}, it follows from the definition of the reshaping operator $$\max_k\|\mathbf{x}_k^{(\ell+1)}-P_nf_k^{(\ell+1)}\|_2
= \max_{j,k}\|\mathbf{z}^{(\ell)}_{p,r}-P_nh^{(\ell)}_{p,r}\|_2.$$
(Again, see Appendix \ref{app: MFCN} for notation.)
Therefore, by Lemma \ref{lem: other step error} we have 
\begin{align*}
\max_k\|\mathbf{x}_k^{(\ell+1)}-P_nf_k^{(\ell+1)}\|_2
=& \max_{j,k}\|\mathbf{z}^{(\ell)}_{p,r}-P_nh^{(\ell)}_{p,r}\|_2.\\
\leq& \|P_n\tilde{g}^{(\ell)}_{j,k}-\tilde{\mathbf{y}}^{(\ell)}_{j,k}\|_2.\\
\leq& A^{(\ell)}_2\max_{j,k}\|P_n g^{(\ell)}_{j,k}-\mathbf{y}^{(\ell)}_{j,k}\|_2\\
\leq& A^{(\ell)}_2A^{(\ell)}_1\max_{j,k}\|P_n \tilde{f}^{(\ell)}_{j,k}-\tilde{\mathbf{x}}^{(\ell)}_{j,k}\|_2\\
\leq& A^{(\ell)}_2A^{(\ell)}_1(\max_k\|\mathbf{x}_k^{(\ell)}-P_nf_k^{(\ell)}\|_2
+\epsilon_{n}).
\end{align*}

Since $\|\mathbf{x}_0^{(\ell)}-P_nf^{(0)}_k\|_2=0$ for all $k$,
we may use induction to conclude that 
\begin{equation*}\|\mathbf{x}^{(\ell)}_{k}-P_nf^{(\ell)}_k\|_2\leq \sum_{i=0}^{\ell-1} \prod_{j=i}^{\ell-1} A_{1}^{(j)} A_{2}^{(j)} \epsilon_{n}.
\end{equation*}
In particular, if we assume that $A_{1}^{(j)} A_{2}^{(j)}\leq A$, we have 
$$
\|\mathbf{x}^{(\ell)}_{k}-P_nf^{(\ell)}_k\|_2\leq \ell A^\ell \epsilon_{n}.
$$
and if $A=1,$ we have 
$$
\|\mathbf{x}^{(\ell)}_{k}-P_nf^{(\ell)}_k\|_2\leq \ell \epsilon_{n}.
$$
\end{proof}

\begin{remark}
Results similar to Theorems \ref{thm: Filter Error short} and \ref{thm: bound given filter bound short} can also be derived in the setting where the filters $w$ are bandlimited, i.e., $w(\lambda_i)=0\:\forall i>\kappa$. They can also be obtained when $G_n$ is constructed as a $k$-NN graph. Additionally, similar results can be derived if the filters have Lipschitz constants greater than one (where the bound will depend on the largest Lipschitz constant).
\end{remark}

\section{Experiments and Extension to $k$-NN Graphs}\label{app: numerics}

In Figure \ref{fig:eps_spectral_filter_converg_plot}, we numerically demonstrate the convergence of a spectral filter, complementing our theoretical analysis in Theorem \ref{thm: Filter Error short}. We focus on the two-dimensional unit sphere, embedded in $\mathbb{R}^3$, with uniform sampling, since in this setting there are known formulas for the eigenvalues and eigenfunctions (i.e., spherical harmonics) which allows us to compare our numerical implementation to ground truth.  We consider a simple function $f$ chosen to be the sum of two eigenfunctions i.e., $f=Y^0_1+Y^0_2$, where $Y^i_j$ is the $i$-th harmonic of degree $j$. We sample $n$ points uniformly from the sphere evaluated $f$ at these points, and built the graph $G_n$ as described in Section \ref{sec: MFCN discrete implementation}. We then applied the spectral filter $w(\lambda)=e^{-\lambda}$. 

When computing the eigendecomposition, we used only the first $64$ eigenpairs as $w(\lambda)$ takes negligible values at the higher eigenvalues. 
We repeated this process $10$ times for each value of $n$ and calculated the discretization errors of the spectral filter as $\|w(\mathbf{L}_n) P_n f - P_n w(\mathcal{L}) f\|_2$ for increasingly large values of $n$.
Additionally, we also tracked the convergence of the first two distinct, non-zero eigenvalues (corresponding to the first eight eigenvalues counting their multiplicity), 
as shown in Figure \ref{fig:eps_eigenvals_converg_plot}. We see that the numerical eigenvalues converge to the true values of $\frac{\ell(\ell+1)}{8\pi}$, for $\ell=1,2$. (Please note that with uniform sampling on the two-dimensional unit sphere, $\mathcal{L}f=-\frac{1}{2\rho}\text{div}(\rho^2\nabla f)$ reduces to $-\frac{1}{8\pi}\Delta$, where $-\Delta=-\text{div}\circ\nabla$ is the negative Laplace-Beltrami operator.)

\subsection{Extension to $k$-NN graphs}
In the main text, for the sake of concreteness, we have focused on $\epsilon$-graph constructions. However, our results may be straightforwardly extended to (symmetric) $k$-NN graphs, $G_n=(V_n,E_n)$, in which $\{x_i,x_j\}\in E_n$ if $x_i$ is one of the $k$-th closest points to $x_j$ (with respect to Euclidean distance in $\mathbb{R}^D$) or if $x_j$ is one of the $k$ closest points to $x_i$.

\begin{figure*}
    \begin{center}
    \includegraphics[width=1.0\textwidth]{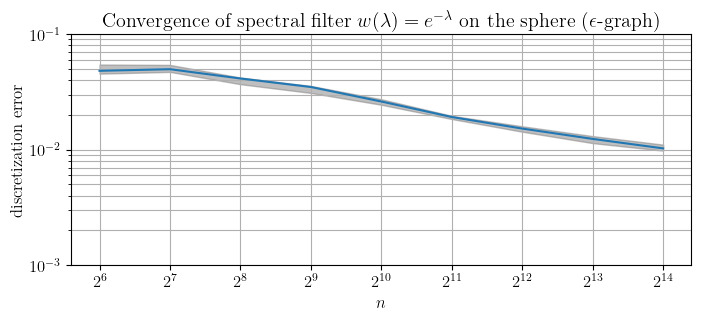}
    \caption{Discretization error for spectral filter $w(\lambda)=e^{-\lambda}$ applied to the sum of two spherical harmonics, for an $\epsilon$-graph construction. The median error of 10 runs is shown in blue, against a gray band of the 25th- to 75th-percentile error range.}
    \label{fig:eps_spectral_filter_converg_plot}
    \end{center}
\end{figure*}
\begin{figure*}
    \begin{center}
    \includegraphics[width=1.0\textwidth]{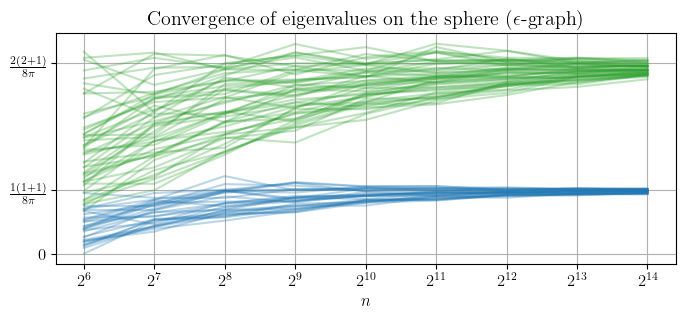}
    \caption{Convergence of first eight, non-zero eigenvalues on the sphere, for an $\epsilon$-graph construction, all 10 runs combined. The blue lines are the first three eigenvalues that converge to the same limit (since the first non-zero eigenvalue of the spherical Laplacian has multiplicity three). Similarly, the green lines are the next five eigenvalues. }
    \label{fig:eps_eigenvals_converg_plot}
    \end{center}
\end{figure*}

\begin{figure*}
    \begin{center}
    \includegraphics[width=1.0\textwidth]{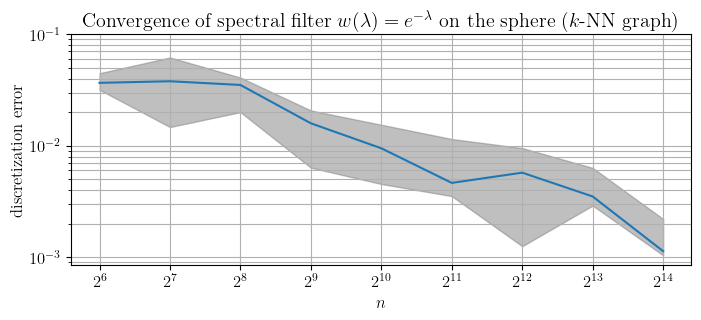}
    \caption{Discretization error for spectral filter $w(\lambda)=e^{-\lambda}$ applied to the sum of two spherical harmonics, for a $k$-NN graph construction. The median error of 10 runs is shown in blue, against a gray band of the 25th- to 75th-percentile error range.}
    \label{fig:knn_spectral_filter_converg_plot}
    \end{center}
\end{figure*}
\begin{figure*}
    \begin{center}
    \includegraphics[width=1.0\textwidth]{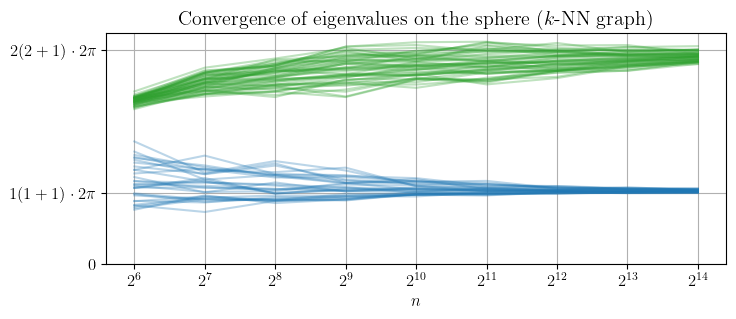}
    \caption{Convergence of first eight, non-zero eigenvalues on the sphere, for a $k$-NN graph construction, all 10 runs combined. The blue lines are the first three eigenvalues that converge to the same limit (since the first non-zero eigenvalue of the spherical Laplacian has multiplicity three). Similarly, the green lines are the next five eigenvalues.}
    \label{fig:knn_eigenvals_converg_plot}
    \end{center}
\end{figure*}

In this setting, one defines the graph Laplacian by $
\mathbf{L}_n=\frac{d+2}{v_d n}\left(\frac{nv_d}{k}\right)^{1+2/d}\left(\mathbf{D}_{n}-\mathbf{A}_{n}\right),
$
where $v_d$ is the volume of the $d$-dimensional Euclidean unit ball (and $\mathbf{D}_{n}$ and $\mathbf{A}_{n}$ are the degree and adjacency matrices associated to $G_n$), and the limiting manifold Laplacian is given by $\mathcal{L}f=-\frac{1}{2\rho}\text{div}(\rho^{1-2/d}\nabla f)$, which reduces to $-2\pi\Delta$ when sampling uniformly on the sphere. If one sets $k \sim \log(n)^{\frac{d}{d+4}} n^{\frac{4}{d+4}}$, then one may readily derive results analogous to 
 Theorems \ref{thm: Filter Error short}
 and \ref{thm: bound given filter bound short}. The only difference in the proof is that we must invoke Theorems 2.5 and 2.9 of \citet{Calder2019}, rather than Theorems 2.4 and 2.7 (which we restate as Theorem \ref{thm: recall Calder results}). We illustrate this numerically in Figures \ref{fig:knn_spectral_filter_converg_plot}  and \ref{fig:knn_eigenvals_converg_plot}, which are the analogs of Figures \ref{fig:eps_spectral_filter_converg_plot} and \ref{fig:eps_eigenvals_converg_plot}, but with the graph constructed as a $k$-NN graph rather than an $\epsilon$ graph.

\end{document}